\newcommand{\eat}[1]{}
\newtheorem{theo}{Theorem}
\newtheorem{lemm}{Lemma}
\def\BibTeX{{\rm B\kern-.05em{\sc i\kern-.025em b}\kern-.08em
    T\kern-.1667em\lower.7ex\hbox{E}\kern-.125emX}}
\title{
Improving Your Graph Neural Networks: \\
A High-Frequency Booster 
}
\begin{document}



\newcommand*{\affaddr}[1]{#1}
\newcommand*{\affmark}[1][*]{\textsuperscript{#1}}
\newcommand*{\email}[1]{\texttt{#1}}

\author{
Jiaqi Sun\affmark[1], 
Lin Zhang\affmark[2], 
Shenglin Zhao\affmark[2], 
Yujiu Yang\affmark[1$^\dagger$]\thanks{$^\dagger$Corresponding author}\\
\affaddr{\affmark[1]Shenzhen International Graduate School, Tsinghua University, Shenzhen, China}\\
\affaddr{\affmark[2]Tencent, Shenzhen, China}\\
\email{sunjq20@mails.tsinghua.edu.cn,linzhang0529@gmail.com}\\
\email{zsl.zju@gmail.com,yang.yujiu@sz.tsinghua.edu.cn}
}

\maketitle

\begin{abstract}

Graph neural networks (GNNs) hold the promise of learning efficient representations of graph-structured data, and one of its most important applications is semi-supervised node classification.
However, in this application, GNN frameworks tend to fail due to the following issues: over-smoothing and heterophily.
The most popular GNNs are known to be focused on the message-passing framework, and recent research shows that these GNNs are often bounded by low-pass filters from a signal processing perspective.
We thus incorporate high-frequency information into GNNs to alleviate this genetic problem.
In this paper, we argue that the complement of the original graph incorporates a high-pass filter and propose Complement Laplacian Regularization (CLAR) for an efficient enhancement of high-frequency components.
The experimental results demonstrate that CLAR helps GNNs tackle over-smoothing, improving the expressiveness of heterophilic graphs, 
which adds up to $3.6\%$ improvement over popular baselines and ensures topological robustness.
\end{abstract}

\begin{IEEEkeywords}
Representation learning, graph neural networks, regularization methods
\end{IEEEkeywords}

\section{Introduction}
Recent years have witnessed the prosperity of graph neural networks (GNNs) in various domains summarized in~\cite{2021GNNs}.
At an early age, GNN models often directly apply the full-scale spectrum filters from Laplacian eigenvectors, such as Cheby-GCN~\cite{DBLP:conf/nips/DefferrardBV16}.
Although theoretically sound, these methods have prohibitive computational costs due to the eigendecomposition.
To alleviate this issue, researchers propose to approximate filters and lead to modern GCN~\cite{DBLP:conf/iclr/KipfW17},
aggregating neighboring nodes' messages for updating the central nodes, called message-passing (MP) mechanism.
This mechanism is often framed as a regularization to integrate the graph information~\cite{DBLP:conf/iclr/KipfW17}.


Despite its success, such a regularization inherits issues from the message passing mechanism that only focuses on the low-frequency information {(i.e., the low-valued eigenvalues of the graph Laplacian matrix; related definitions are provided in the next section.)} in graphs~\cite{DBLP:conf/aaai/LiHW18}.
This often leads to severely degraded performance when stacking multiple layers, i.e., over-smoothing phenomena~\cite{DBLP:conf/icml/ChenWHDL20,DBLP:conf/icml/WuSZFYW19}.
Another vital bottleneck in applying existing GNNs is that graphs with heterophily are orthogonal to the underlying assumption of MP, where directly connected nodes in such graphs are not necessarily similar~\cite{DBLP:conf/aaai/BoWSS21}.
In the context of regularization, 
a line of research attempt to address these limitations, such as promoting consistency between layers~\cite{DBLP:conf/aaai/0002MC21}, randomly dropping 
edges~\cite{DBLP:conf/iclr/RongHXH20}.
However, these solutions require low-frequency filtering as an input from the beginning; therefore, the corresponding expressive ability is constrained due to the above issues.

\begin{figure}[t]
    \centering
    \includegraphics[trim=2.5cm 0cm 1cm 0cm,clip=true,width=0.48\textwidth]{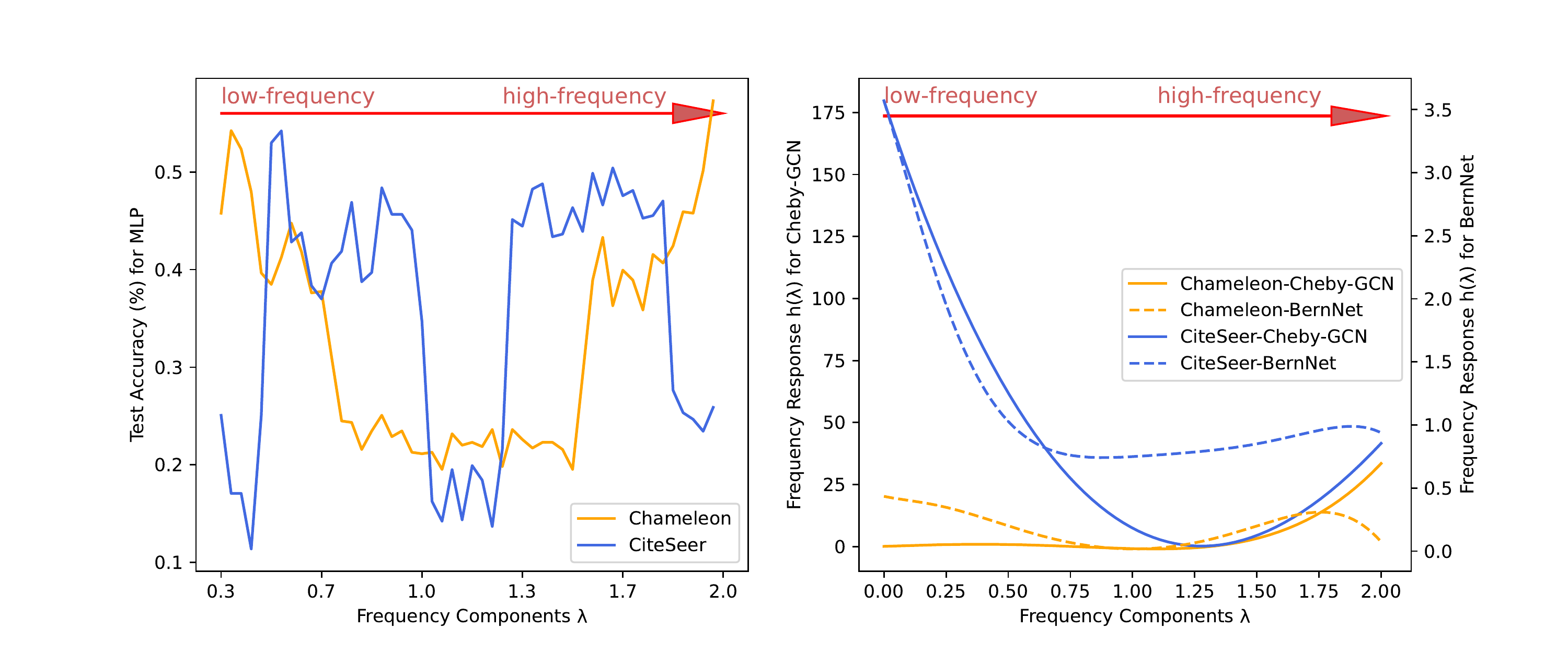}
    \caption{We illustrate the importance of high-frequency information in graphs by experimenting with Chameleon (heterophily) and CiteSeer (homophily) datasets. On the left, we show the node classification accuracy under different levels of frequencies.
    The performance from the high-frequency part alone is on par with that of low-frequency, suggesting discriminative power from high frequencies.
    On the right, GNNs with wide-band filters receive unneglectable responses
    from high frequencies, which is often abandoned in existing GNNs.
    }
    \label{fig:freq}
\end{figure}

\textit{Could we instead expand the range of frequencies in this setting?} 
Research shows that graph data often includes a wide range of frequencies~\cite{ortega2018graph}, and high-frequency components have advantages in certain tasks~\cite{fagcn2021}.
In particular, the aforementioned issues may not be as much of a concern under high frequencies.
Consider, as an example,  both
high-frequency components 
and low-frequency components from two different types of graphs: homophily and heterophily, as shown in Fig.~\ref{fig:freq}.
The classification performance varies from low-frequency to high-frequency zones, suggesting the importance of high frequency in this task.
Further, we observe comparable frequency response from the high-frequency level when using wide-band frequency-sensitive filters, such as GPRGNN~\cite{DBLP:conf/iclr/ChienP0M21} 
and Cheby-GCN~\cite{DBLP:conf/nips/DefferrardBV16}. 
This further strengthens the role of high-frequency components, which should be considered in the message passing procedure.

In recent years, there are also some efforts try to explore high-frequency in GNNs, e.g.,~\cite{DBLP:conf/aaai/BoWSS21,DBLP:conf/cikm/DongDJJL21}. These approaches are comprehensively concluded in the next section. 
\textit{However, they all implement new GNN architectures, and fail to adapt to existing models.} This is unwanted in many maturely practiced scenarios, where the necessary properties from existing models can not be abandoned.
%

Our goal in this work is to efficiently exploit the high frequency component to boost the express ability of GNNs while maintaining the merit of traditional GNNs. 
We develop Complement LAplacian Regularization (CLAR), which introduces the high frequency as a plugin adding to existing GNNs.
More specifically, in CLAR, we adopt 1) random sampling strategies to better capture the high-frequency components from the complement and 2) the original Laplacian regularization to balance the noisy connections from the sampling.
%
Our contributions are as follows:
\begin{itemize}
    \item  \noindent{\bf  Problem Formulation:} We 
    show the necessity of applying 
    high-frequency components and define the problem of integrating high-frequency in graph learning.
    \item  \noindent{\bf  Effective Algorithm:} We develop CLAR, a model-agnostic plugin that enhances the high-frequency components for GNNs with theoretical justification. 
    \item \noindent{\bf  Effectiveness:} Extensive experiments show that our solution enhances the performance of GNNs against over-smoothing, enhance the expressivity in the heterophilic graphs, and boost the robustness against topological noise.
    It also outperforms other regularization methods using the high-frequency information. 
\end{itemize}

\section{Related Work}
\label{sec:relate}

Under the message-passing mechanism, GNNs aggregate the neighbors' information and update the central nodes (formulated in Eq.~(\ref{equ:gnn_layer})), such as GCN~\cite{DBLP:conf/iclr/KipfW17}, GAT~\cite{DBLP:conf/iclr/VelickovicCCRLB18} using the attention mechanism, and SAGE~\cite{DBLP:conf/nips/HamiltonYL17} sampling the neighboring nodes. 

\textbf{Spectral explanations for GNNs.}\quad
GNNs apply (graph) filters on the full-scale eigenvalues/frequency components obtained from the graph Laplacian eigendecomposition~\cite{DBLP:conf/icml/WuSZFYW19,DBLP:conf/iclr/BalcilarRHGAH21}.
{The corresponding frequency components are of low-frequency and the high-frequency parts, as shown in Fig.~\ref{fig:freq}. Accordingly, we have filters of low-pass and high-pass when defining them on these frequency components.
For instance, in the right canvas of Fig.~\ref{fig:freq}, the denoted $\mathtt{Chameleon-Cheby-GCN}$ indicates a high-pass filter, since it specially lets high-frequency components to pass.}

For a deeper view of the property that GNNs as filters hold, \cite{DBLP:conf/aaai/LiHW18} prove that most GNNs focus on the low-frequency components and act as low-pass filters, they quickly become more low-pass after stacking layers, leading to over-smoothing. Furthermore, the low-frequency components encourage the connected nodes to become similar, which is orthogonal to heterophilic graphs where the connected nodes tend to be disparate~\cite{DBLP:conf/cikm/DongDJJL21,DBLP:conf/aaai/BoWSS21}.

{
\textbf{Architecture-based solutions.}\quad
%
Self-loops~\cite{DBLP:conf/iclr/XuHLJ19} and skip-connection~\cite{DBLP:conf/icml/XuLTSKJ18, DBLP:conf/icml/ChenWHDL20, DBLP:conf/iccv/Li0TG19} help to retain the initial node features for alleviating over-smoothing in GNNs. 
Dealing with heterophily, CPGNN~\cite{DBLP:conf/aaai/ZhuR0MLAK21} sieves the compatible neighbors in every propagation via learning a compatible matrix for each label. H2GCN~\cite{DBLP:conf/nips/ZhuYZHAK20} concatenates all layers to restore representations from the former aggregations. 

Another line of solutions analyze the causes of this problem,  i.e., the nature of low-pass filters in GNNs, and address it directly.
Such new architectures often incorporate high-pass filters to alleviate these issues.
FAGCN~\cite{DBLP:conf/aaai/BoWSS21} arranges low-pass and high-pass aggregations in each layer. AdaGNN~\cite{DBLP:conf/cikm/DongDJJL21} applies several filters in each aggregation and \cite{DBLP:conf/pkdd/LiKW21} further adapt the attention mechanism. 
\cite{chang2021spectral} leverages attention mechanism to achieve an high-pass filter feature.
Besides, the polynomial based graph filters, e.g.,  Cheby-GCN~\cite{DBLP:conf/nips/DefferrardBV16}, BernNet~\cite{DBLP:journals/corr/abs-2106-10994}, and GPRGNN~\cite{DBLP:conf/iclr/ChienP0M21}, have the potential to approximate arbitrary filters by utilizing high-ordered polynomials.

\textbf{Regularization-based solutions.}\quad
Some recent work develops regularizers for existing GNN models, making this strategy practical as it avoids designing from scratch.
 P-reg~\cite{DBLP:conf/aaai/0002MC21} promotes the consistency of adjacent layers to accelerate GNNs to the infinite layer. MADReg penalizes the smoothness of hidden representations, and AdaGraph enhances the graph topology via self-training~\cite{DBLP:conf/aaai/ChenLLLZS20}. DropEdge~\cite{DBLP:conf/iclr/RongHXH20} randomly drops a specific ratio of edges, a particular case of AdaGraph.
These regularizers align with the semi-supervised paradigm (Eq.~(\ref{equ:semi_loss})), nevertheless, they inherit the low-frequency components from GNNs and thus heterophily is unsolved.
}

\textbf{Our proposal to related work.}\quad
Our proposal, CLAR, is the first regularization method that derives from spectral domain to enhance high-frequency components for GNNs. 
Unlike architecture-based approaches, we incorporate high-pass filters in a plug-in manner, avoiding the work of implementing a new architecture, thus maintaining the properties of the backbone GNN.
Compared with existing regularization methods, we practice the high-pass feature advanced.

\section{Preliminaries}
\label{sec:pre}
\subsection{Graph Concepts}
\textbf{Notations.}\quad
$\mathcal{G}=\left(\mathcal{V},\mathcal{E}\right)$ is a connected graph with node features $\mathit{X} = \{x_1,\cdots,x_{\mathnormal{N}}\}$ and labels $Y = \{y_1, \cdots, y_{N}\}$ with $c$ categories.
$\mathit{A} \in \mathbb{R}^{\mathnormal{N}\times\mathnormal{N}}$ is the adjacency matrix, whose degree matrix is diagonal that $\mathit{D}_{ii}=\sum_{j=1}^{\mathnormal{N}}\mathit{A}_{ij}$.
The Laplacian matrix is $\mathit{L}=\mathit{D}-\mathit{A}$. 
$\mathrm{N}(\cdot)$ operates to get the neighbors, and 
$\mathrm{L}(\cdot)$ is to obtain a Laplacian matrix.
$\hat{\cdot}$ indicates add-self-loop operator, and $\tilde{\cdot}$ means normalization.
A normalized Laplacian matrix can be factorized as $\tilde{L}=U \Lambda U^T$, where $\Lambda$ is diagonal with entries $\Lambda_{ii} = \lambda_i$ indicating the frequency components/spectrum of $\mathcal{G}$.
%

\textbf{The complement graph.}\quad
The complement of $\mathcal{G}$ is denoted by $\bar{\mathcal{G}}=\left(\mathcal{V},\bar{\mathcal{E}}\right)$, where $\bar{\mathcal{E}}=\mathcal{V}\times\mathcal{V}-\mathcal{E}$, i.e., for any distinctive nodes $v_i, v_j \in \mathcal{V}$, $\left(v_i, v_j\right) \in \bar{\mathcal{E}}$ if and only if $\left(v_i, v_j\right) \notin \mathcal{E}$~\cite{1977Graph}.


\textbf{Homophily and heterophily.}\quad 
Homophily means that the connected nodes share the same label, while the labels of neighboring nodes may distinct from each other on heterophilic graphs. \cite{DBLP:conf/nips/ZhuYZHAK20} propose homophily ratio $h$ to measure homophily,
\begin{equation}
    h=\frac{|\{(v_i,v_j)|(v_i,v_j)\in\mathcal{E}\land y_i =y_j\}|}{|\mathcal{E}|}.
    \label{equ:homo}
\end{equation}
$h$ represents the ratio of edges with the same label over the total edges, and thus $h\in[0,1]$. Usually, we consider the graphs with $h>0.7$ are homophilic, and the ones with $h<0.3$ are heterophilic~\cite{DBLP:journals/corr/abs-2106-10994,DBLP:conf/aaai/BoWSS21}.

\subsection{Graph Neural Networks Layer}
GNNs utilize both the node features and the graph structure for learning representations. In each layer, GNN aggregates the neighboring nodes and update the central nodes. 
Given the representation of node $v_i$ at $k$-th layer $h_i^{(k)}$, and the neighboring nodes of $v_i$, ${N}(v_i)$, GNN formulates the hidden representation of $v_i$ at layer $k+1$ as 
\begin{equation}
    h_i^{(k+1)}=\sigma\left(\mathtt{UPD}\left(\mathtt{AGG}\left(\{h_j^{(k)}\}\right), h_i^{(k)} \right)W^{(k+1)}\right),
    \label{equ:gnn_layer}
\end{equation}
where $\sigma$ is the non-linear activation, $\mathtt{UPD}$ and $\mathtt{AGG}$ are the aggregation and updating functions, $\{h_j^{(k)}\}$ is the hidden representations for the neighbors of node $v_i$, that $\{v_j|v_j\in \mathrm{N}(v_i)\}$. $W^{(k+1)}$ is the feature transformation parameter. 
The input is initialized as the raw node feature as $h_i^{(0)}=x_i$.
Functions like $\mathtt{UPD}$ and $\mathtt{AGG}$ process node information together with the adjacency matrix $A$, 
resulting in a recursive updating manner as $H^{(k+1)}=f^{(k+1)}(H^{(k)}, A)$,
where  $f^{(k+1)}(\cdot)$  denotes parameter learning at the $k+1$-layer.
Then, the overall GNNs stack $K$ layers and return $ H^{(K)}$,
\begin{equation}
    H^{(K)} = f^{(K-1)}( \cdots f^{(1)}(f^{(0)}(X,A),A) \cdots ,A).
    \label{equ:gnn_fx}
\end{equation}

\subsection{Semi-supervised Node classification}
Recall that a standard semi-supervised node classification is often formulated as 
\begin{equation}
    \mathcal{L}=\mathcal{L}_{cls}+\gamma\mathcal{L}_{reg}, 
    \label{equ:semi}
\end{equation}
where $\mathcal{L}_{cls}$ optimizes the objective for classification, e.g., a multi-layer perceptron (MLP); $\mathcal{L}_{reg}$ promotes the similarity of the connected nodes' representations, writing as
\begin{equation}
    \mathcal{L}_{reg}=\tr\left(g\left(X\right)^TLg\left(X\right)\right),
    \label{equ:reg}
\end{equation}
where $\tr(\cdot)$ calculates the trace of a matrix and $g(\cdot)$ can be any transformations on the initial features. 

\section{Our Proposal: Complementary LAplacian Regularization}
\label{sec:prop}
\newtheorem{lemma}{Lemma}
\newtheorem{remark}{Remark}
\newtheorem{claim}{Claim}
\newtheorem{theorem}{Theorem}
We propose CLAR to replenish the high-frequency components for GNNs via a complementary Laplacian regularization. 
In the following, we first introduce our main theory in Theorem~\ref{theorem}, showing 
complementary Laplacian regularization incorporates a high-pass filter.
Next, we present the implementation details of CLAR, including sampling strategies and constructing the regularization. 

\subsection{Main theorem} 

Suppose $W^*$ is the optimal feature transformation parameters and initialize $Z^{(0)}=XW^*$ for aggregation. \cite{DBLP:conf/www/ZhuWSJ021} propose:
\begin{lemm}
\label{lemma2}
Given an adjacency matrix $A$,
a GNN model can be used to approximate a regularization objective that is defined on the corresponding Laplacian matrix $\mathnormal{L}(A)$.
\end{lemm}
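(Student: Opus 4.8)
\quad
The plan is to reconstruct the optimization viewpoint of~\cite{DBLP:conf/www/ZhuWSJ021}: exhibit an explicit regularized objective defined on $\tilde{L}(A)$ and show that the layer-wise propagation in Eq.~(\ref{equ:gnn_layer}), stacked as in Eq.~(\ref{equ:gnn_fx}), realizes an iterative solver for it. Concretely, fixing $Z^{(0)}=XW^*$, consider
\begin{equation}
\mathcal{O}(Z)=\|Z-Z^{(0)}\|_F^2+c\,\tr\!\left(Z^T\tilde{L}(A)Z\right),
\label{equ:lemma2_obj}
\end{equation}
where the second term is precisely the Laplacian regularizer of Eq.~(\ref{equ:reg}) with $g(X)=Z$, and $c>0$ balances fidelity to the (transformed) features against smoothness over the graph.

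First I would differentiate: $\nabla_Z\mathcal{O}(Z)=2(Z-Z^{(0)})+2c\,\tilde{L}(A)Z$, so a gradient-descent step from $Z^{(k)}$ with step size $\eta$ gives $Z^{(k+1)}=(1-2\eta)Z^{(k)}+2\eta Z^{(0)}-2\eta c\,\tilde{L}(A)Z^{(k)}$. Using $\tilde{L}(A)=I-\tilde{A}$ and a suitable $\eta$, the right-hand side becomes an affine combination of $Z^{(0)}$ and $\tilde{A}Z^{(k)}$, i.e., exactly the feature aggregation a (linearized) GNN layer performs; equivalently, the exact minimizer is $Z^\star=(I+c\tilde{L}(A))^{-1}Z^{(0)}$, whose Neumann expansion $\sum_{t\ge 0}(-c)^t\tilde{L}(A)^tZ^{(0)}$ is a polynomial in $\tilde{A}$ applied to $Z^{(0)}$ — and truncating it at degree $K$ is what $K$ stacked linear layers of Eq.~(\ref{equ:gnn_fx}) compute. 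Hence $H^{(K)}$ approximates $Z^\star$, the minimizer of the $\tilde{L}(A)$-regularized objective, with the error governed by $c$ and $K$ (small whenever the truncated tail of the series is small).

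The hard part will be bridging the gap between this linearized correspondence and a general GNN: real layers interleave nonlinearities $\sigma$ and per-layer weights $W^{(k)}$, so the identification is exact only for SGC-type models, and for general $f^{(k)}$ one must argue that the fixed point of the update is approximately preserved. A second subtlety is normalization — architectures such as GCN propagate with the self-loop-augmented operator $\hat{\tilde{A}}=\tilde{D}^{-1/2}\hat{A}\tilde{D}^{-1/2}$ rather than with $\tilde{A}$ — so the regularizer should be taken on the augmented Laplacian $\tilde{L}(\hat{A})$, and one has to bound the perturbation introduced by the added self-loops. Controlling these two approximations (nonlinearity/weights and self-loops) together with the truncation-in-$K$ error is the crux; everything else is the routine gradient and series manipulation sketched above.
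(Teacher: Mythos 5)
Your proposal is essentially correct, but note that the paper itself does not prove this lemma at all: it is imported verbatim by citation from the optimization framework of \cite{DBLP:conf/www/ZhuWSJ021}, so there is no in-paper argument to compare against. What you have written is a faithful reconstruction of the standard derivation in that cited line of work: the objective $\|Z-Z^{(0)}\|_F^2 + c\,\tr\!\left(Z^T\tilde{L}(A)Z\right)$, the gradient step that becomes an affine combination of $Z^{(0)}$ and $\tilde{A}Z^{(k)}$ for $\eta = 1/(2(1+c))$, the closed-form minimizer $(I+c\tilde{L}(A))^{-1}Z^{(0)}$, and its truncated Neumann expansion realized by $K$ stacked propagation layers. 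Your computation is right (the one unstated hypothesis is $c\,\lambda_{\max}(\tilde{L})<1$, i.e.\ roughly $c<1/2$ since the normalized Laplacian spectrum lies in $[0,2]$, for the Neumann series to converge), and the caveats you flag --- nonlinearities and per-layer weights breaking the exact correspondence outside SGC/APPNP-type models, and the self-loop-augmented propagation operator requiring the regularizer to be taken on $\tilde{L}(\hat{A})$ rather than $\tilde{L}(A)$ --- are precisely the points at which the lemma, as stated, is a heuristic correspondence rather than a theorem; closing them fully is the content of the cited reference, not something the present paper attempts. One further observation: your version is in fact more careful than the paper's own use of the lemma, since the instantiation in Eq.~(\ref{equ:gcn_optim}) drops the fidelity term $\|Z-Z^{(0)}\|_F^2$ altogether, and without that term the minimization of $\tr(Z^T\tilde{L}(A)Z)$ is degenerate (any $Z$ in the null space of $\tilde{L}(A)$ attains the optimum); your formulation avoids this.
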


Besides,~\cite{DBLP:conf/iclr/BalcilarRHGAH21} link the aggregation functions to the frequency response of spectral filters.

\begin{lemm}
\label{lemma1}
Given an adjacency matrix $A$, 
the frequency response $\Phi_{A}$  of 
a GNN defined on $A$ is 
negatively related to the frequency components of Laplacian matrix of $A$, i.e., $\Phi_{A}(\lambda) \approx 1-\lambda$ is a low-pass filter.
\end{lemm}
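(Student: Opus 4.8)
\emph{Proof idea.} The plan is to strip the GNN down to its underlying linear graph operator and then read off its action in the graph-Fourier basis. First I would take the canonical message-passing aggregation, the renormalized adjacency $\hat{\tilde{A}} = \hat{D}^{-1/2}\hat{A}\hat{D}^{-1/2}$ that appears inside Eq.~(\ref{equ:gnn_layer}) for GCN, and discard the nonlinearity $\sigma$ and the learnable weights $W^{(k)}$, since ``frequency response'' is a statement about the fixed graph filter rather than the trainable part (this is the same linearization that underlies Lemma~\ref{lemma2}). Setting self-loops aside momentarily, the symmetrically normalized adjacency obeys the elementary identity $\tilde{A} = I - \tilde{L}$, so with $\tilde{L} = U\Lambda U^{T}$ one has $\tilde{A} = U(I-\Lambda)U^{T}$.

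Next I would apply the graph Fourier transform: writing a signal as $x = U\hat{x}$, one aggregation step acts as $\tilde{A}x = U(I-\Lambda)\hat{x}$, i.e.\ it multiplies the $i$-th frequency coordinate by $1-\lambda_{i}$, and stacking $K$ layers multiplies it by $(1-\lambda_{i})^{K}$. Hence the frequency response is $\Phi_{A}(\lambda) = 1-\lambda$ (up to the $K$-th power), which on the normalized-Laplacian spectrum $\lambda \in [0,2]$ is strictly decreasing and maximal at $\lambda = 0$. That monotone decrease is exactly ``$\Phi_{A}$ is negatively related to the frequency components'', and it means the operator preserves the low-frequency end while attenuating the high-frequency end --- the low-pass property, consistent with the filter curves in Fig.~\ref{fig:freq}.

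To turn the exact equality into the stated $\Phi_{A}(\lambda)\approx 1-\lambda$, I would then put the self-loops back: the renormalization trick gives $\hat{\tilde{A}} = I - \hat{\tilde{L}}$, where $\hat{\tilde{L}}$ is the normalized Laplacian of the self-loop-augmented graph $\hat{\mathcal{G}}$; adding self-loops only shifts and contracts the spectrum toward $0$, so $\hat{\tilde{A}}$ is no longer diagonalized by the original $U$ but its Fourier-domain profile $\mathrm{diag}(U^{T}\hat{\tilde{A}}U)$ still follows the affine, decreasing shape $1-\lambda$ up to that bounded perturbation. For message-passing variants beyond GCN (attention weights as in GAT, mean or sampled neighborhoods as in SAGE), I would note that each such aggregation is a local averaging operator $P$ that can be written as $P = I - \mathcal{L}_{P}$ for a Laplacian-like $\mathcal{L}_{P}$ whose spectrum is non-negative and starts at $0$, so its response is again decreasing near $\lambda = 0$; this last step is what is imported from~\cite{DBLP:conf/iclr/BalcilarRHGAH21}.

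The main obstacle is this generality-plus-linearization step rather than any computation: ``the frequency response of a GNN'' is literally well defined only for a linear, permutation-equivariant spectral operator, so one must argue that dropping $\sigma$ and the weight matrices does not change the qualitative spectral profile, and that the eigenbasis of the perturbed operator (self-looped, attention-reweighted, row-normalized) stays close enough to $U$ for the clean statement $\Phi_{A}(\lambda)\approx 1-\lambda$ to carry meaning. Making ``$\approx$'' quantitative --- e.g.\ bounding the eigenvalue drift caused by self-loops in terms of the degree sequence --- is where the real effort goes; everything preceding it is a one-line spectral identity.
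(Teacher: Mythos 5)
The paper does not actually prove Lemma~\ref{lemma1}: it is imported verbatim from \cite{DBLP:conf/iclr/BalcilarRHGAH21}, so there is no in-paper argument to compare yours against. Your reconstruction --- linearize the layer of Eq.~(\ref{equ:gnn_layer}) by dropping $\sigma$ and $W^{(k)}$, observe that the propagation operator $\tilde{A}=D^{-1/2}AD^{-1/2}$ satisfies $\tilde{A}=I-\tilde{L}$, and read off the per-frequency multiplier $1-\lambda_i$ in the eigenbasis $U$ of $\tilde{L}$ --- is exactly the standard derivation that the cited reference formalizes, so as a blind proof it takes the right route and supplies the justification the paper leaves implicit.

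One substantive caveat, which is the only real gap. On the spectrum of the normalized Laplacian, $\lambda\in[0,2]$, the signed response $1-\lambda$ is decreasing, but its magnitude $|1-\lambda|$ is V-shaped: it equals $1$ at both $\lambda=0$ and $\lambda=2$ and vanishes at $\lambda=1$. Without self-loops the operator is therefore a band-rejection filter in magnitude, not a low-pass one; the highest frequencies pass with only a sign flip. The self-loop renormalization is consequently not a ``bounded perturbation to be controlled,'' as your closing paragraph frames it --- it is the mechanism that makes the lemma true, because it contracts the top of the spectrum strictly below $2$ (see \cite{DBLP:conf/icml/WuSZFYW19}) so that $|1-\lambda|$ is largest near $\lambda=0$; this is also how \cite{DBLP:conf/iclr/BalcilarRHGAH21} arrive at a response of roughly $1-\frac{\bar d}{\bar d+1}\lambda$ with $\bar d$ the average degree, which is the precise content of the ``$\approx$'' in the statement. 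Your sketch mentions the contraction but demotes it to an error term; promoting it to the central step would make the argument complete. The remaining points (extending to GAT/SAGE-style averaging operators written as $I-\mathcal{L}_P$, and the remark that a frequency response is only literally defined for the linear part) are handled correctly and at an appropriate level of rigor for a lemma of this kind.
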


So far, we have bridged the GNN regularization objective to the following: i) the GNN aggregation function, and ii) the frequency response. 
We take GCN for an example to instantiate this relationship.
Using Lemma~\ref{lemma2}, the regularization objective for GCN is 
\begin{equation}
    \mathcal{L}_{GCN}=\min_{Z} \tr(Z^T\tilde{\mathrm{L}}(A)Z).
\label{equ:gcn_optim}
\end{equation}
{
With this formulation, we can view the mean-average aggregation of GCN (using $\tilde{A}$) as a regularization objective in terms of the normalized Laplacian matrix $\tilde{\mathrm{L}}(A)$.
}
Next, suppose the frequency components of $\tilde{\mathnormal{L}}(A)$ is $\lambda$, where we remove the subscript to denote the variable of frequency components. From Lemma~\ref{lemma1}, GCN acts as a low-pass filter that $\Phi_{A}(\lambda) \approx 1-\lambda$, aligning with the results from \cite{DBLP:conf/www/ZhuWSJ021,DBLP:conf/aaai/LiHW18}.

Along this line of thought, \textit{if a GNN regularization negates $\mathcal{L}_{GCN}$, it incorporates a high-pass filter}, and we propose:
\begin{theo}
\label{theorem}
Suppose a complement graph $\bar{\mathcal{G}}=(\mathcal{V},\bar{\mathcal{E}})$.
$A_s$ is the adjacency constructed by aribitrary $\mathcal{E}_s \subset \bar{\mathcal{E}}$.
The Laplacian regularization built on $A_s$, i.e., $\mathcal{L}_{s} = \min_{Z} \tr \left( Z^T \tilde{\mathrm{L}}(A_s) Z \right)$ produces a high-pass filter effect on the original graph $\mathcal{G}=(\mathcal{V}, \mathcal{E})$'s signal $Z$.
\end{theo}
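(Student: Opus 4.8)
The plan is to chain the two cited lemmas with the classical spectral identity relating a graph to its complement. First I would read Lemma~\ref{lemma2} in the reverse direction, exactly as the paper does for GCN in Eq.~(\ref{equ:gcn_optim}): the objective $\mathcal{L}_{s}=\min_{Z}\tr(Z^{T}\tilde{\mathrm{L}}(A_s)Z)$ is the regularization that an (implicit) GNN aggregation defined on $A_s$ approximates. Hence the operation $\mathcal{L}_{s}$ induces on the signal $Z$ is a spectral filter, and by Lemma~\ref{lemma1} applied to the adjacency $A_s$ this filter is low-pass \emph{with respect to the spectrum of $\tilde{\mathrm{L}}(A_s)$}, i.e. $\Phi_{A_s}(\lambda_s)\approx 1-\lambda_s$, which decreases monotonically in $\lambda_s$. (Equivalently, the proximal form $Z=(I+\tilde{\mathrm{L}}(A_s))^{-1}Z^{(0)}$ has response $1/(1+\lambda_s)$, again monotone decreasing in $\lambda_s$; either description suffices.)

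Second, and this is the crux, I would relate the frequency components $\lambda_s$ of $\tilde{\mathrm{L}}(A_s)$ to the frequency components $\lambda$ of the original $\tilde{\mathrm{L}}(A)$. The starting point is the identity $\mathrm{L}(\mathcal{G})+\mathrm{L}(\bar{\mathcal{G}})=\mathrm{L}(K_N)=NI-\mathbf{1}\mathbf{1}^{T}$; on the subspace $\mathbf{1}^{\perp}$, where all non-trivial graph signals live, this gives $\lambda(\bar{\mathcal{G}})=N-\lambda(\mathcal{G})$, so the smallest non-zero frequencies of $\mathcal{G}$ become the largest of $\bar{\mathcal{G}}$ and conversely — the two spectra are order-reversed, i.e. "negatively related" in the sense of Lemma~\ref{lemma1}. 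For an arbitrary subset $\mathcal{E}_s\subset\bar{\mathcal{E}}$ the identity is no longer exact, but every quadratic term $(z_i-z_j)^2$ in $Z^{T}\mathrm{L}(A_s)Z$ is indexed by a pair $(v_i,v_j)\notin\mathcal{E}$; minimizing $\mathcal{L}_{s}$ therefore enforces smoothness precisely across the \emph{non-edges} of $\mathcal{G}$, i.e. it penalizes the complement of what the original Laplacian regularization Eq.~(\ref{equ:reg}) penalizes. If $\mathcal{E}_s$ is obtained by sampling non-edges at a fixed rate $p$, then $\mathbb{E}[\mathrm{L}(A_s)]=p\,\mathrm{L}(\bar{\mathcal{G}})$, so the spectral order-reversal carries over in expectation up to a harmless positive scalar that rescales but does not reorder the pass band, and symmetric normalization preserves this.

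Finally I would compose the two steps: the filter of $\mathcal{L}_{s}$ decreases in $\lambda_s=\lambda(\tilde{\mathrm{L}}(A_s))$, while $\lambda_s$ decreases as $\lambda=\lambda(\tilde{\mathrm{L}}(A))$ increases; composing, the response of $\mathcal{L}_{s}$ as a function of the \emph{original} frequency $\lambda$ is monotonically increasing — it suppresses the low-frequency components of $\mathcal{G}$ and amplifies the high-frequency ones relative to them, which is by definition a high-pass filter acting on $\mathcal{G}$'s signal $Z$, as claimed.

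I expect the main obstacle to be the middle step. The clean eigenvalue relation $\lambda(\bar{\mathcal{G}})=N-\lambda(\mathcal{G})$ is exact only for the unnormalized Laplacian of the \emph{full} complement; making it precise for a symmetrically normalized Laplacian built from an \emph{arbitrary} (or randomly sampled) edge subset $A_s$ requires either the expectation argument above, an eigenvalue-interlacing/perturbation estimate, or restricting the claim to the "which node-pairs are being smoothed" level at which the paper's $\approx$-statements already operate. Deciding which of these is intended, and verifying that normalization does not spoil the order-reversal of the spectrum, is the delicate part of the argument.
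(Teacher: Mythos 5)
Your argument is correct at the same level of rigor as the paper's, but it takes a genuinely different route. The paper's proof is purely algebraic: it writes $\mathcal{E}_s=\mathcal{E}^*\setminus\mathcal{E}$ with $\mathcal{E}^*=\mathcal{E}_s\cup\mathcal{E}$, so that $A_s=-A+A^*$ and, by (claimed) linearity of the Laplacian quadratic form, $\tr\left(Z^T\tilde{\mathrm{L}}(A_s)Z\right)=-\tr\left(Z^T\tilde{\mathrm{L}}(A)Z\right)+\tr\left(Z^T\tilde{\mathrm{L}}(A^*)Z\right)$ as in Eq.~(\ref{equ:prop}); the first term is the negation of the low-pass filter $\Phi_A(\lambda)\approx 1-\lambda$ from Lemmas~\ref{lemma1} and~\ref{lemma2}, hence high-pass. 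You instead keep $\mathcal{L}_s$ intact as a low-pass filter \emph{in the spectrum of} $\tilde{\mathrm{L}}(A_s)$ and argue that this spectrum is order-reversed relative to that of $\tilde{\mathrm{L}}(A)$ via $\mathrm{L}(\mathcal{G})+\mathrm{L}(\bar{\mathcal{G}})=NI-\mathbf{1}\mathbf{1}^T$ and an expectation argument for the sampled subset. Each route has its own soft spot: yours must justify that eigenvector bases and orderings survive subsampling and normalization (which you flag honestly), while the paper's must justify that $\tilde{\mathrm{L}}(\cdot)$ is linear in its argument, which is false for the degree-normalized Laplacian since $A$, $A_s$ and $A^*$ have different degree matrices --- so neither is airtight. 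What the paper's decomposition buys, and what your route does not surface, is the explicit residual term $\tr\left(Z^T\tilde{\mathrm{L}}(A^*)Z\right)$: this leftover low-pass component on the union graph is precisely what the paper uses to motivate re-sampling $\mathcal{E}_s$ at every iteration and adding the corrective low-pass term $\mathcal{L}_{ori}$ of Eq.~(\ref{equ:ori}). What your route buys is a cleaner quantitative picture of \emph{why} the effect is high-pass (spectral order reversal between a graph and its complement) rather than the purely formal sign flip, and it makes explicit the role of randomness via $\mathbb{E}[\mathrm{L}(A_s)]=p\,\mathrm{L}(\bar{\mathcal{G}})$, which the paper only asserts informally.
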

\begin{proof}
Suppose a edge set $\mathcal{E}^*$ composed by the arbitrary subset of the complement edges and the original edge set, that $\mathcal{E}^* = \mathcal{E}_s \cup \mathcal{E}$. Then we can rewrite $A_s = - A + A^* $, where $A^*$ is the adjacency matrix of $\mathcal{E}^*$ and $A_s$ is for $\mathcal{E}_s$. The objective of the Laplacian regularization with respect to $A_s$ is formulated as follows:
\begin{align}
    \mathcal{L}_{s} &= \min_Z  \tr\left(Z^T\tilde{\mathrm{L}}\left(-A + A^*\right)Z\right) \\
    &= \min_Z -\tr\left(Z^T\tilde{\mathrm{L}}(A)Z\right) + \tr\left(Z^T\tilde{\mathrm{L}}\left(A^*\right)Z\right).
    \label{equ:prop}
\end{align}
Using Lemma~\ref{lemma2} and Lemma~\ref{lemma1}, $\tr(Z^T\tilde{\mathrm{L}}(A)Z)$ is a low-pass filter $\Phi_A(\lambda) = (1-\lambda)$. The former half of Eq.~(\ref{equ:prop}) negates this low-pass filter, and thus it constructs a high-pass filter. 
\end{proof}
{


Any given subset of complementary edges equals the result of 
subtracting the original edge set from the union of the arbitrary subset and original edge set, making the Laplacian regularization built on it consist of high-pass filters from the original graph and low-pass filters from the corresponding complementary graph.
Note that this theorem demands no restrictions on subsets, making its realization straightforward on any graphs.
\eat{
The intuition behind this theorem is simple, regularization on an arbitrary subset of complementary edges can be thought of as subtracting the original set of edges from the union of the arbitrary subset and the original set of edges.
Given this, Laplacian regularization built on arbitrary subsets high-pass filters the original spectrum and low-pass filters the spectrum of the union set consisting of the original edge set and the subset.}

We want to emphasize that Eq.~(\ref{equ:prop}) may fall into an unexpected calculation as  $\tr\left(Z^T\tilde{\mathrm{L}}\left(A^*\right)Z\right)$ when fixing the subset during the training. Instead, one can apply random sampling at each layer to avoid this, therefore, we develop the following sampling methods on $\bar{\mathcal{E}}$.
}

\subsection{Sampling from the Complement Graph}
Constructing a high-pass filter from a complement graph is not trivial. 
According to Theorem~\ref{theorem}, it is unnecessary and impractical to include the whole complement graph. Instead, we only need the high-pass character from any subset of the complement. 
Therefore, 
we propose two efficient sampling strategies for the complement graph from two perspectives, including a node level and an edge level.

\textit{Node-based sampling strategy} traverses the nodes set. 
Given a fixed sampling multiple $S$, for each node $v_i$ in $\mathcal{V}$, we sample $S$ nodes from $\mathcal{V}$, if and only if they are not connected in the graph. Then, we obtain a sampled complement with $|\mathcal{V}|$ nodes and maximum of $S\times|\mathcal{V}|$ edges.
 
\textit{Edge-based sampling strategy}
further considers the influence of node degrees. 
From a local perspective, suppose a node connects to more neighbors than other nodes. We argue it could be more influenced by the original low-pass filter and more edges from the complement graph should be helpful to make the high-pass filter works. 
This strategy iterates all the nodes in each of the edges, in which the node degrees embed. 
Given a fixed sampling multiple $S$, for each node $(u,v)$ in each edge, we sample $S$ nodes from $\mathcal{V}$, if and only if either $u$ and $v$ does not connected to them. By this means, we sample the complement with $|\mathcal{V}|$ nodes (since the graph is connected) and $2S\times|\mathcal{E}|$ edges.

In the following content, we signify the sampled graph obtained from the both methods as $\mathcal{G}_{s}=(\mathcal{V}_{s},\mathcal{E}_{s})$.
Please refer to the following Algorithm~\ref{alg:sample} for the sampling details.

\begin{algorithm}[h]
\caption{Sampling Strategy $\mathtt{SAMPLE}$}
\label{alg:sample}
\textbf{Input}: the raw graph $\mathcal{G}=(\mathcal{V},\mathcal{E})$ and sampling hyper-parameter $S$. 
\begin{algorithmic}[1]
\STATE Initialize $\mathcal{E}_{s} \leftarrow \emptyset$.
\IF{sampling based on nodes}
\FOR{each node $u$ in $\mathcal{V}$}
\STATE sample $S$ edges $\mathcal{E}_{u}=\{(u,v_i)\}$ from $\bar{\mathcal{E}}$
\STATE $\mathcal{E}_{s} \leftarrow \mathcal{E}_{s} \cup \mathcal{E}_{u}$
\ENDFOR
\ENDIF
\IF{sampling based on edges}
\FOR{each edge $(u,v)$ in $\mathcal{E}$} 
\STATE sample $S$ edges $\mathcal{E}_{u}=\{(u,v_i)\}$ from $\bar{\mathcal{E}}$
\STATE sample $S$ edges $\mathcal{E}_{v}=\{(u_i,v)\}$ from $\bar{\mathcal{E}}$
\STATE $\mathcal{E}_{s} \leftarrow \mathcal{E}_{s} \cup \mathcal{E}_{u} \cup \mathcal{E}_{v}$
\ENDFOR
\ENDIF
\RETURN Sampled graph $\mathcal{G}_{s}=(\mathcal{V},\mathcal{E}_{s})$
\end{algorithmic}
\end{algorithm}


\subsection{Constructing Regularization on the Sampled Complement}
\label{sec:const_clar}
Based on the sampled complement graph, we propose Complement LAplacian Regularization~(CLAR) as a plugin layer to enhance GNNs. CLAR is composed of two regularization components to cover both low-pass and high-pass functions. In formal, we use $\mathcal{L}_{CLAR}$ to represent the CLAR regularization, then,
\begin{equation}
    \mathcal{L}_{CLAR} = \beta \mathcal{L}_{com} + \alpha \mathcal{L}_{ori},
    \label{equ:clar}
\end{equation}
where $\mathcal{L}_{com}$ and $\mathcal{L}_{ori}$ are high-pass and low-pass regularization respectively, and $\alpha$ and $\beta$ are hyper-parameters.

The high-pass regularization $\mathcal{L}_{com}$ is a Laplacian regularization built on the extracted complement $\mathcal{G}_{s}$,
\begin{equation}
    \mathcal{L}_{com}=\tr \left(f(X,A)^T\mathrm{L}(\tilde{A_s})f(X,A)\right),
    \label{equ:com}
\end{equation}
where $\tilde{A_s}$ is the normalized adjacency matrix reduced from $\mathcal{G}_{s}$. 

{
In Theorem~\ref{theorem}, we show that  constructing $A_s$ with randomness can prevent $\mathcal{L}_{com}$ from converging to unexpected low-pass filters from $A^*=A+A_s$. 
Our proposed  sampling strategy $\mathtt{SAMPLE}$ is well aligned with the need for this theorem.
However, the frequency components of $\tilde{\mathrm{L}}(A^*)$, $\lambda^*$, is incalculable at each training iteration. 
We therefore introduce a low-pass filter on $\tilde{\mathrm{L}}(A)$ to amend this.}
Specifically, we append a modified Laplacian regularization, where we replace Eq.~(\ref{equ:reg}) with GNNs output $f(X,A)$, 
\begin{equation}
    \mathcal{L}_{ori}=\tr \left(f(X,A)^T \mathrm{L}(\tilde{A}) f(X,A)\right).
    \label{equ:ori}
\end{equation} 
{
Here, $\mathcal{L}_{ori}$ (Eq.~(\ref{equ:ori})) relies on hidden representations (i.e., $f(X,A)$), as opposed to operating on the fixed structure in traditional GNNs.
}

In general, incorporating GNNs with a high-pass filter provided by CLAR can be achieved by the objective function:
\begin{equation}
    \mathcal{L} = \mathcal{L}_{GNN} + \mathcal{L}_{CLAR}. 
\end{equation} 
$\mathcal{L}_{GNN}$ is the classification loss of GNNs, e.g.,
\begin{equation}
    \mathcal{L}_{GNN}=\mathtt{CrossEntropy}\left(f(X,A)_{train},Y_{train}\right).
    \label{equ:semi_loss}
\end{equation}
This formulation is in line with the semi-supervised node classification paradigm, i.e., Eq.~(\ref{equ:semi}).
Obviously, our method is completely a post plugin for GNNs, and can be applied to any GNN backbones. 
The overall optimization process of deploying CLAR is listed in Algorithm~\ref{alg:clar}.

\begin{algorithm}[h]
\caption{Optimization process of GNNs with CLAR}
\label{alg:clar}
\textbf{Input}: the raw graph $\mathcal{G}=(\mathcal{V},\mathcal{E})$ with adjacency matrix $A$ and node features $X$, GNN function $f_{\Theta}$ with trainable parameters $\Theta$, sampling strategy $\mathtt{SAMPLE}$, sampling hyper-parameter $S$, adjustment hyper-parameters $\alpha, \beta$, learning step $\eta$, and the maximum training epoch $E$.
\begin{algorithmic}[1]
\STATE Initialize the parameter of the GNN function: $\Theta \leftarrow \Theta_0$
\FOR{each epoch $e$ in $0,1,\cdots E$}
\STATE get the hidden representations: $H \leftarrow f_{\Theta_e}(X,A)$
\STATE compute GNN loss function $\mathcal{L}_{GNN}\leftarrow\mathtt{CrossEntropy}\left(H_{train},Y_{train}\right)$
\STATE get the sampled complement: $\mathcal{G}_{s} \leftarrow \mathtt{SAMPLE}(\mathcal{G},S)$
\STATE calculate two regularization terms: $\mathcal{L}_{com} \leftarrow \tr(H^T\tilde{A}_sH)$ and $\mathcal{L}_{ori}\leftarrow \tr(H^T\tilde{A}H)$ 
\STATE $\mathcal{L}_{CLAR} \leftarrow \beta \mathcal{L}_{com} + \alpha \mathcal{L}_{ori}$, $\mathcal{L}\leftarrow\mathcal{L}_{CLAR}+\mathcal{L}_{GNN}$
\STATE calculate the gradient $\frac{\partial \mathcal{L}}{\partial \Theta_{e}}$ 
\STATE update $\Theta_{e+1}\leftarrow\Theta_{e} + \eta \frac{\partial \mathcal{L}}{\partial \Theta_{e}}$
\IF{the criterion for stop training is satisfied}
\STATE get the optimized parameter $\Theta^* \leftarrow \Theta_{e+1}$
\ENDIF
\ENDFOR
\RETURN the optimized GNN function $f_{\Theta^*}$
\end{algorithmic}
\end{algorithm}

\textbf{Remark.}\quad
The hyper-parameter $\alpha$ and $\beta$ can shaping the filter. For example, increasing $\alpha$ enhances the low-frequency components behaving the same as Laplacian regularization (i.e., Eq.~(\ref{equ:reg})). On the contrary, increasing $\beta$ enhances the high-frequency components. 
Additionally, since the sampling multiple $S$ is not directly related to $\{\lambda^*\}$, it is not sensitive to CLAR, which is consistent with our experiments for Table~\ref{tab:spearman}.  
\begin{table*}[t]
    \centering
    \caption{Statistics of Data Sets with Homophilous and Heterophilous}
    \begin{tabular}{ccccccccccccc}
    \toprule
    &\multicolumn{7}{c}{Homophilic graphs}&\multicolumn{5}{c}{Heterophilic graphs}\\
    \cmidrule(r){2-8}   \cmidrule(l){9-13}
    & CiteSeer & Cora & Computers & CS & Photo & Physics & PubMed & Actor & Chameleon & Squirrel & Cornell & Texas \\
    \midrule
    $|\mathcal{V}|$ & 4,732 & 2,708 & 13,752 & 18,333 & 7,650 & 34,493 & 19,717 & 7,600 & 2,277 & 5,201 & 183 & 183 \\
    $|\mathcal{E}|$ & 13,703 & 5,278 & 491,722 & 81,894 & 238,162 & 247,962 & 44,338 & 238,162 & 72,202 & 434,146 & 298 & 325 \\
    $h$ & 0.736 & 0.810 & 0.777 & 0.8081 & 0.827 & 0.9314 & 0.802 & 0.219 & 0.233 & 0.224 & 0.305 & 0.108 \\
    $c$ & 6 & 7 & 10 & 15 & 8 & 5 & 3 & 5 & 6 & 5 & 5 & 5 \\
    \bottomrule 
    \end{tabular}
    \label{tab:graph_stat}
\end{table*}

\subsection{Spectral Analysis for Other Regularization Methods}
We further provide spectral analysis of CLAR and other regularization methods to indicate that only our proposal obtains a high-pass filter. We omit the hyper-parameter ahead of the regularization methods, e.g. $\gamma$ in Eq.~(\ref{equ:semi}).

\textit{Network Lasso~(NL)}
,i.e. Eq.~(\ref{gl_optim})~\cite{nlasso}, is identical to graph Laplacian regularization~ Eq.~(\ref{equ:semi_loss})~\cite{kipf_deep_2020}, a low-pass filter covered in GCN (Eq.~(\ref{equ:gcn_optim})) and CLAR (Eq.~(\ref{equ:ori})). Therefore, its improvement stays trivial.
\begin{equation}
    \mathcal{L}_{NL}=\min_{Z} \tr(Z^T\tilde{\hat{L}}Z).
\label{gl_optim}
\end{equation}

\textit{P-reg~(P)}
, i.e. Eq.~(\ref{p_optim}), penalties the difference of $Z$ and GCN-smoothed representations $\tilde{\hat{A}}Z$, which is equivalent to squared Laplacian regularization using squared error~\cite{DBLP:conf/aaai/0002MC21}. 
\begin{equation}
    \mathcal{L}_{P}=\min_{Z} \tr(Z^T\tilde{\hat{L}}^T\tilde{\hat{L}}Z).
    \label{p_optim}
\end{equation}

The convolution matrix of P-reg is the adjacency matrix of $\tilde{\hat{L}}^2$. According to Lemma~\ref{lemma1}, its frequency response is $\Phi_{P-reg}(\lambda)\approx1-\lambda^2$, as the frequency component of $\tilde{\hat{L}}^2$ is $\lambda^2$. Therefore, P-reg is still a low-pass filter with more low-frequency components than NL.

\textit{MADReg~(MR)}
, i.e. Eq.~(\ref{mad_optim}), proposes a penalty on the difference between higher-order neighbors and lower-order neighbors to alleviate over-smoothing. Specifically, MADReg obtains the $k$-order neighbors by stacking $k$ layers (e.g., GCN layers). Practically, MADReg takes $8\geq$ and $\leq 3$ for higher-order and higher-order, respectively. 
\begin{equation}
    \mathcal{L}_{MR}=\min_{Z} \tr(Z^T((Q-\tilde{\mathrm{L}}(\tilde{A^7}))-\tilde{\mathrm{L}}(\tilde{A^3}))Z)
    \label{mad_optim}
\end{equation}
$Q$ is a matrix of ones, and $(Q-\tilde{\mathrm{L}}(\tilde{A^7}))$ signifies higher-order neighbors that $k\geq8$. We approximate the Laplacian of the power adjacency matrix to the power of the Laplacian matrix, $\tilde{\mathrm{L}}(\tilde{A^k})\approx\tilde{\mathrm{L}}(\tilde{A})^k$. Then we have 
\begin{align*}
    \mathcal{L}_{MR}&\approx\min_{Z} \tr(Z^TQZ)
    -\tr(Z^T\tilde{\mathrm{L}}(\tilde{A})^7Z)\\
    &-\tr(Z^T\tilde{\mathrm{L}}(\tilde{A})^3Z).
\end{align*}
Leaving the independent first term, we derive the frequency responses of MADReg, that $-\Phi_{MR}(\lambda)\approx(1-\lambda)^3+(1-\lambda)^7$, which is a negation of two more extreme low-pass filters, that makes the high-frequency components not distinctive.

\textit{AdaGraph~(AG)}
, i.e. Eq.~(\ref{ag_optim}), modifies Network Lasso (Eq.~(\ref{gl_optim})) via adding and dropping edges to $A$. \textit{DropEdge} is a special case of AG, when $A_{add}=[0]_{N\times N}$.
\begin{equation}
    \mathcal{R}_{AG}=\min_{Z} \tr(Z^T\tilde{\mathrm{L}}(A-A_{drop}+A_{add})Z)
    \label{ag_optim}
\end{equation}
Practically, $l1$-norm regularization is constrained on $A_{add}$ and $A_{drop}$, making them sparse~\cite{DBLP:conf/aaai/ChenLLLZS20}. Consequently, the low-pass filter of $\tilde{\mathrm{L}}(A)$ dominates the optimization problem.

In total, CLAR is unparalleled among other regularization methods for incorporating a high-pass filter.

\textbf{Remark.}\quad  
We want to emphasize that adding CLAR to GNNs does not equal adding a regularization on the complete graph.
Direct usage of the complete graph may fail to capture the desired high-pass information because it is often too dense.
Therefore, a sampling process is required.  
Towards this, we randomly sample complement edges in each training epoch, resulting in high-pass filtering according to
Theorem~\ref{theorem}.
Besides, such a random sampling can ensure the overall optimization does not depend on any particular edge sets.



\begin{table*}[t]
    \centering
    \caption{Mean squared error (MSE) of fitting artificial filters. With CLAR, GCN fits better on high-pass and band-rejection filters, which consist more high-frequency components than the other two.}
    \begin{tabular}{cccccccccc}
    \toprule
    & & \multicolumn{2}{c}{\textbf{High-pass}} & \multicolumn{2}{c}{Low-pass} & \multicolumn{2}{c}{Band-pass} & \multicolumn{2}{c}{\textbf{Band-rejection}} \\
    \cmidrule{3-10}
    & & \multicolumn{2}{c}{$1-\exp(-10\lambda^2)$} & \multicolumn{2}{c}{$\exp(-10\lambda^2)$} & \multicolumn{2}{c}{$\exp(-10(\lambda-1)^2)$} & \multicolumn{2}{c}{$1-\exp(-10(\lambda-1)^2)$} \\
    \cmidrule{3-10}
    & & Vanilla & \textbf{+~CLAR} & Vanilla & \textbf{+~CLAR} & Vanilla & \textbf{+~CLAR} & Vanilla & \textbf{+~CLAR} \\ 
    \midrule
    \multirow{3}{*}{Chameleon} & GCN  & 41.6940 & 41.5680 & 3.7930 & 3.6899 & 29.3921 & 29.3701 & 16.9194 & 16.8692 \\
                               & GAT  & 48.1473 & 44.3101 & 3.9987 & 3.9906 & 32.3080 & 31.7117 & 24.9859 & 18.1414 \\
                               & SAGE & 31.5833 & 31.5101 & 3.2877 & 3.2766 & 23.0391 & 22.7841 & 19.7868 & 19.6637 \\
    \midrule
    \multirow{3}{*}{Squirrel}  & GCN  & 38.1492 & 38.0042 & 3.6280 & 3.6042 & 29.5017 & 29.4274 & 13.6287 & 13.6055 \\
                               & GAT  & 39.9876 & 39.2110 & 4.8531 & 4.0223 & 30.1945 & 30.1945 & 18.6755 & 14.3854 \\
                               & SAGE & 33.2764 & 33.2138 & 2.3511 & 2.3502 & 26.2713 & 26.1866 & 11.1463 & 11.1266  \\ 
                              
    \bottomrule
    \end{tabular}
    \label{tab:filter}
\end{table*} 
\section{Experiments}
\label{sec:exp}
In this section, we conduct experiments on homophilic and heterophilic graphs. 
\footnote{Our implementation is available at  \url{https://github.com/sajqavril/Complement-Laplacian-Regularization}.}
Table~\ref{tab:graph_stat} shows the statistics of the two types of graphs. The experiments aim to answer the following research questions:
\begin{itemize}
    \item[Q1.] Does CLAR better express high-frequency components?
    \item[Q2.] Does CLAR help to express graphs with heterophily?
    \item[Q3.] Is CLAR capable of alleviating over-smoothing?
    \item[Q4.] How does CLAR affect topological robustness?
\end{itemize}
 

\textit{Homophilic graphs.}
(1) Citation networks: Cora, CiteSeer, and PubMed~\cite{DBLP:conf/icml/YangCS16}, (2) Amazon networks: Computers and Photo~\cite{DBLP:journals/corr/abs-1811-05868}, networks of whether goods are frequently brought together. 
and (3) Coauthor networks: Physics and CS~\cite{DBLP:conf/aaai/ChenLLLZS20}.

\textit{Heterophily graphs.} 
(1) Actor, the actor-only induced subgraph in a film-director-actor-writer network~\cite{DBLP:conf/iclr/PeiWCLY20}. (2) Cornell and Texas from WebKB~\cite{DBLP:conf/iclr/PeiWCLY20}, networks of web pages, and the hyperlinks between them. And (3) Squirrel and Chameleon from WikipediaNetwork~\cite{DBLP:journals/compnet/RozemberczkiAS21}, including Wikipedia pages and their hyperlinks. 

\textbf{Remark for computational complexity.}\quad
For GNN models, the training time complexity of GCN, GPRGNN, and Cheby-GCN is linearly related to the propagation step due to their iterative formulations. However, BernNet is quadratic. $\mathcal{O}(|\mathcal{E}||\mathcal{V}|)$ is expected in each propagation layer if utilizing sparse matrix multiplication, $\mathcal{O}(|\mathcal{V}|^3)$ otherwise. CLAR takes time complexity $\mathcal{O}(d|\mathcal{V}|)$ and no space complexity, where $d$ is the hidden dimension. CLAR and other regularization methods stay the same time and space complexity to the backbone GNNs during inference. Therefore, we include the regularization methods and Cheby-GCN and GPRGNN with two layers in our experiments. For a comprehensive examination of CLAR, we further consider GAT and SAGE.

\subsection{Fitting on Artificial Graph Signals}
\label{exp:art}
For Q1, we artificialize low-pass, high-pass, band-pass, and band-reject filters on Chameleon and Squirrel and examine the fitness of different approaches.
In detail, we eigendecomposite the Laplacian matrix and apply the filters, i.e., $h(\cdot)$, on the graphs signals that $U diag(h(\lambda))U^T$. Afterward, we take the original graph signals as input and converge to the filtered signals using the mean squared error (MSE).
Experiments adopt the early-stop mechanism with a maximum of $1000$ epochs and patience of $50$ epochs and optimize using Adam with the fixed learning rate of $0.1$. The compared approaches are set as follows.

\textit{Baselines (vanilla).} We adopt two-layer GCN, GAT, and SAGE as the baselines. Attention heads for GAT in each layer are 8 and 1. We accept all the neighbors in SAGE to avoid the influence of sampling. Since the parameters in SAGE are doubled for the separation of central nodes and the aggregated neighbors, SAGE is an enhanced GCN. We append CLAR on the final output of the baselines. The hidden dimension is 32 for all. 

\textit{Ours (+~CLAR).} CLAR applied on the same eigendecomposition with the baselines for fair comparisons. We figure out an empirical solution since $\alpha$ and $\beta$ are too extensive to tune. Particularly, we clamp the loss value of CLAR within $[0.0, 1.0]$ for numerical stability and find that $\alpha,\beta \in [0,1,2]$ is enough to search for CLAR to assist GNNs. 

Table~\ref{tab:filter} shows the remained MSE after training, which is the lower, the better. All the baselines with CLAR outperform the vanilla versions, as CLAR expresses more frequency components. Among the four filters, the high-pass and the band-reject compose more high-frequency components than others, where the baselines can benefit more from CLAR. Besides, we observe that GAT lacks frequency explanation the most and also improves the most significant for supplying frequency components. SAGE is slightly less satisfied than GCN, due to the doubled parameters and separation.

\subsection{Expressing Real-World Graph Signals}
\label{sec:exp_real}
To answer Q2, we conduct semi-supervised node classification for real-world homophilic and heterophilic graphs. 
We randomly split all the data sets into a 60\% training set, 20\% validation set, 20\% test set, and report the average accuracy~(micro F1 score) after $50$ runs. 
Then, we introduce the implementation of the baselines and CLAR (+~CLAR), and present other compared methods in the following.

\begin{table*}[t]
    \centering
    \caption{Overall performance comparisons on node classification (Accuracy \%)}
    \begin{tabular}{ccccccccccccc}
    \toprule
    & \multicolumn{3}{c}{Baselines} & \multicolumn{2}{c}{Spectral filters} & \multicolumn{4}{c}{Regularization methods} & \multicolumn{3}{c}{{\textbf{+~CLAR}}} \\
    \cmidrule(l){2-4} \cmidrule(l){5-6} \cmidrule(l){7-10} \cmidrule(l){11-13}
         & GCN & GAT & SAGE & Cheby & GPR & NL & P & MR & DE & \textbf{GCN} & \textbf{GAT} & \textbf{SAGE} \\
    \midrule
    CiteSeer   & 79.15 & 80.51 & 79.02 & 79.70 & 74.39 & 79.62 & 79.83 & 79.89 & 78.29 & \textbf{80.65} & 81.57          & \textbf{80.65} \\
    Cora       & 87.24 & 87.87 & 87.34 & 83.44 & 82.46 & 87.61 & 87.83 & 87.70 & 86.06 & 88.23          & \textbf{88.60} & 88.56          \\
    Computers  & 83.22 & 82.80 & 87.38 & 86.96 & 89.32 & 83.46 & 83.23 & 83.19 & 83.50 & 83.81          & 83.75          & \textbf{88.21} \\
    Photo      & 88.39 & 91.34 & 93.95 & 92.07 & 94.05 & 88.86 & 89.12 & 89.72 & 90.03 & 90.03          & 92.21          & \textbf{94.16} \\
    PubMed     & 87.07 & 87.14 & 88.09 & 85.65 & 81.79 & 87.18 & 87.57 & 86.41 & 86.64 & 87.69          & 87.74          & \textbf{88.53} \\
    \midrule
    Actor      & 32.96 & 33.53 & 37.78 & 31.89 & 36.41 & 32.98 & 31.54 & 30.83 & 31.99 & 33.80          & 35.62          & \textbf{39.15} \\
    Chameleon  & 61.25 & 35.35 & 60.61 & 56.06 & 62.46 & 61.98 & 61.58 & 61.98 & 57.11 & \textbf{64.23}          & 42.57          & 63.15 \\
    Squirrel   & 45.73 & 24.13 & 50.37 & 38.13 & 48.71 & 46.88 & 47.64 & 47.22 & 39.48 & 48.60          & 28.05          & \textbf{52.05} \\
    Cornell    & 59.70 & 71.38 & 86.15 & 88.61 & 73.38 & 61.54 & 66.15 & 60.61 & 66.15 & 64.00          & 75.70          & \textbf{90.15} \\
    Texas      & 64.92 & 74.46 & 88.31 & 88.31 & 74.15 & 66.23 & 67.08 & 65.85 & 62.77 & 71.38          & 79.38          & \textbf{92.00} \\
    \bottomrule
    \end{tabular}
    \label{node_classification}
\end{table*}

\textit{Spectral filters.} Cheby-GCN (Cheby)~\footnote{https://github.com/pyg-team/pytorch\_geometric.git} and GPRGNN (GPR)~\footnote{https://github.com/jianhao2016/GPRGNN/} are set to $2$-order for equal computation.

\textit{Regularization methods.} 
We consider all the regularization methods in Sec.\ref{sec:relate} and append them on GCN.
We implement DropEdge~(DE)~\footnote{https://github.com/pyg-team/pytorch\_geometric.git} with a $0.5$ dropout rate.
P-reg~(P)~\footnote{https://github.com/yang-han/P-reg/} is set to $\lambda=0.1$ for citation networks and $0.5$ for the rest. 
Network Lasso~(NL) is identical to CLAR when $\alpha=1, \beta=0$. 
We realize MADReg~(MR) according to the original paper due to the absence of the source code and apply a clamp on the regularization values for numerical stability, same to CLAR. 
Note that we report our experimental results of  AdaGraph~(AG) using the setting from the original papers
due to the lacking of implementation.
Following this,  Planetoid splits are used for citation networks, where the sizes of  training/validation/test are 20/50/1000, respectively.
In the co-author networks, 20 nodes are randomly sampled for training, 30 nodes are for validation and the rest are for testing in each class. 
We summarize all results in Table~\ref{tab:ag_acc}, where CLAR outperforms AG consistently.

\begin{table}[h]
\caption{Comparisons to AG on Node Classification (Accuracy \%)}
\centering
\begin{tabular}{ccccccc}
\toprule
& CiteSeer & Cora & CS & Physics & PubMed\\
\midrule
GCN & 70.3 & 81.5 & 89.8 & 92.8 & 76.3 \\
w/ AG & 69.7 & 82.3 & 90.3 & 93.0 & 77.4\\
w/ CLAR & \textbf{71.5} & \textbf{82.6} & \textbf{91.4} & \textbf{93.9} & \textbf{78.5}\\

\midrule
GAT & 72.5 & 83.0 & 85.5 & 91.1 & 75.9 \\
w/ AG & 69.1 & 77.9 & 86.6 & 91.4 & 76.6\\
w/ CLAR & \textbf{72.7} & \textbf{83.3} & \textbf{87.3} & \textbf{92.0} & \textbf{77.1}\\

\midrule
SAGE & 67.4 & 78.9 & 90.1 & 93.0 & 75.2 \\
w/ AG & \textbf{69.4} & 80.2 & 90.3 & 92.7 & 77.2\\
w/ CLAR & 68.7 & \textbf{80.2} & \textbf{91.7} & \textbf{93.7} & \textbf{77.7}\\

\bottomrule
\end{tabular}
\label{tab:ag_acc}
\end{table}

In Table.~\ref{node_classification}, all the baselines outperform the original versions by CLAR. Significantly heterophilic graphs are improved by more high-frequency components.
Since GNN baselines satisfy the low-frequency components, the homophilic graphs improve less. $2$-order Cheby-GCN and GPRGNN are not enough to express the filters, except for the low-frequency components (homophilic graphs), particularly on Computers and Photo. Even though all the other regularization methods enhance GCN to some extent, their increments are marginal compared to CLAR, because only CLAR contains a high-pass filter.
The tuned hyper-parameters 
indicate that a greater $\alpha$, i.e. the low-pass, is beneficial for homophilic graphs, and a larger $\beta$, i.e. the high-pass helps with heterophily.

\textit{Sensitivity of the sampling hyper-parameter $S$.}
Recall that we assume the classification performance is robust to $S$ in Sec.~\ref{sec:const_clar}.
We further implement an experiment to verify this using Spearman correlation analysis, as shown in Table~\ref{tab:spearman}.
Here, we set the range of $S$ as $\{1,2,4,8,16,32\}$. It is obvious that the relation between the performance and  $S$ is neglectable, demonstrating the robustness of $S$.

\label{sec:reprod}

\begin{table}[h]
    \centering
    \caption{Spearman Correlation Coefficients of $S$ and Accuracy (\%)}
    \begin{tabular}{cccccc}
    \toprule
    & CiteSeer & Cora & CS & Physics & PubMed \\
    \midrule
    GCN & -0.078 & -0.059 & -0.034 & 0.166 & 0.182\\
    GAT & -0.084 & 0.200 & -0.104 & -0.042 & -0.063\\
    SAGE & 0.009 & -0.002 & 0.081 & 0.023 & -0.004\\
    \bottomrule
    \end{tabular}
    \label{tab:spearman}
\end{table}

\subsection{Tackling Over-smoothing} 
For Q3, we apply CLAR on GCN with increasing stacking layers and test on a more challenging Planetoid split, where only $20$ fixed samples of each category are for training, while $500$ and $1000$ samples are for validation and test~\cite{DBLP:conf/icml/YangCS16}. 
Over-smoothing might be more severe due to lacking supervised signals under this context. 
As introduced earlier, GNNs quickly become more low-pass with increasing layers.
To tackle this condition, we introduce more high-frequency components through larger loss values that we loose the clamp to $[0.0, 10.0]$ and restrain $\alpha,\beta \in \{0.0, 0.1, 0.2, 0.5, 0.8, 1.0\}$ to retain numerical stability. 
CLAR~(10) denotes this modified CLAR, and CLAR~(1) introduced earlier.
Besides, we include other regularization methods claiming against over-smoothing, DropEdge with a dropout ratio of $0.2$, and MADReg is set to be the same as $\alpha$ in CLAR~(10). 

We report the test accuracy of GCN with the regularization methods on Cora along with increasing layers in Fig.~\ref{fig:over_smoothing}.
According to the results, CLAR helps GCN beyond the original version and out-perform DropEdge around the early and more deep layers. MADReg performs slightly weaker. Importantly, CLAR is superior to MLP even in the deep layers, certifying the adjustable filters within its effect.

\begin{figure}[h]
    \centering
    \includegraphics[trim=0.3cm 0.4cm 0.3cm 0.3cm,clip,width=0.48\textwidth]{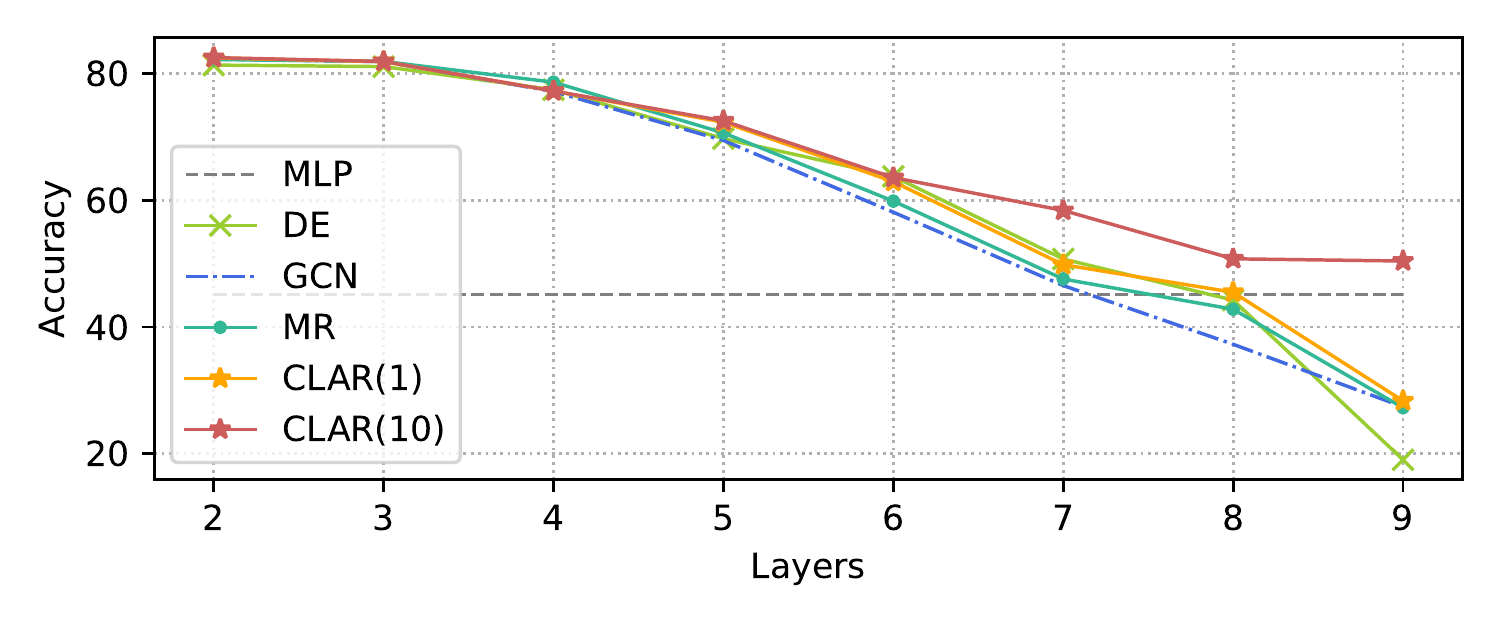}
    \caption{Test accuracy (\%) of GCN with various regularizers. Remarkably, only CLAR is always beyond MLP.}
    \label{fig:over_smoothing}
\end{figure} 

\subsection{Boosting Topological Robustness}
For Q4, we evaluate CLAR against topological noise, i.e., edges missing. This situation is common; in practice, only partial connections are observed often. Due to the random sampling strategies, CLAR is naturally steady to this. We successively drop the $10\%$ ratio of the edges to imitate the situation and evaluate CLAR on two-layer GCN. The results are summarized in Figure~\ref{fig:drop} with using Planetoid splits.

Along with increasing dropout ratios, CLAR helps to maintain performance. Particularly after dropping more edges, CLAR and CLAR* benefit significantly. 
Finally, we provide a spectral explanation for this empirical observation; when the low-frequency components are getting sparse, e.g., dropping edges, utilizing the high-pass filters might be beneficial to express the spectral components.  


\begin{figure}[h]
    \centering
    \includegraphics[trim=0.3cm 0.35cm 0.3cm 0.35cm,clip,width=0.48\textwidth]{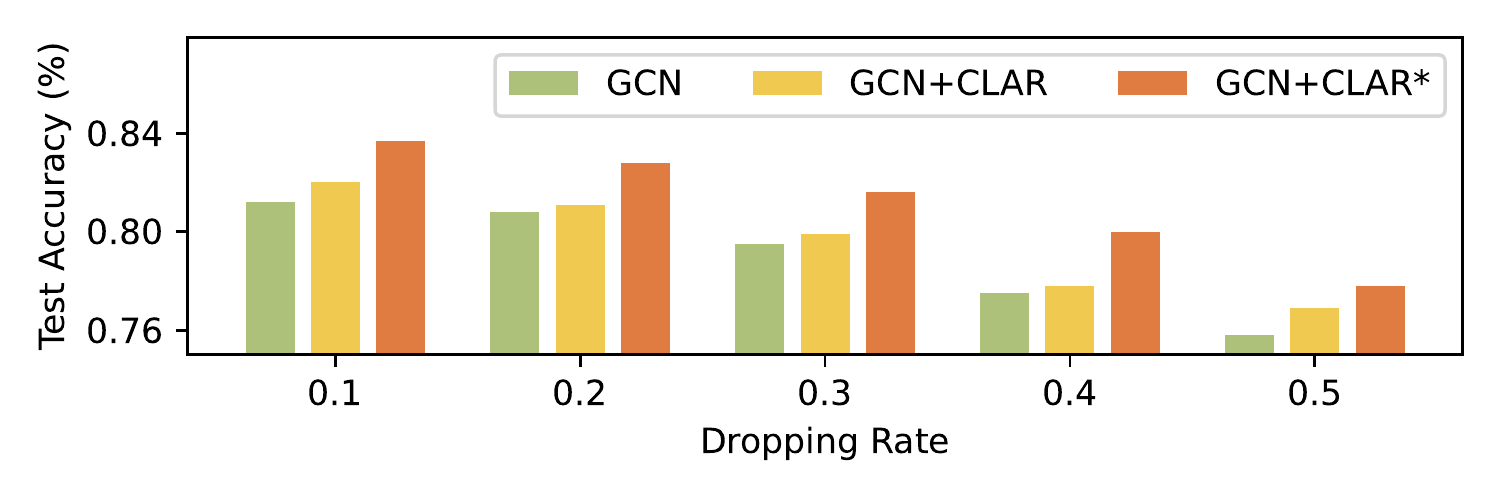}
    \caption{Test accuracy (\%) of GCN+CLAR with increasing dropping edges rate. * denotes magnifying CLAR for $10$ times.}
    \label{fig:drop}
\end{figure}



\subsection{ Sampling Strategies' Evaluation}
We proposed two types of sampling strategies from two perspectives,
namely node-based sampling and edge-based sampling. 
We argue that these two have similar behaviors w.r.t the overall performance.
To verify this, we deploy CLAR on a two-layer GCN on two homophilic datasets 
and two heterophilic datasets. 
We evaluate these two strategies using the following settings for the hyper-parameters:  $\alpha, \beta \in \{0, 0.01, 0.02, 0.05, 0.1, 0.2, 0.5, 1, 2\}$ and $S \in \{1, 2\}$, 
We use Pearson product-moment correlation coefficient to measure the difference between their performance on the  test set, as shown in Table~\ref{tab:hyper_sample}. 
Note that the value of  Pearson product-moment correlation coefficient
is within the range of $[-1,1]$, whose signs signify the positive or negative relations.
Normally, it is said to be a strong positive correlation when this coefficient is greater than $0.5$
~\cite{schober2018correlation}.
In Table~\ref{tab:hyper_sample}, all coefficients are greater than $0.8$
, indicating the difference between these two sampling strategies is neglectable.
One possible explanation is that our target graphs are insensitive to
these two sampling methods due to their sparseness.
In practice, we suggest using the edge-based sampling strategy when graphs are dense and the node-based one on sparse graphs.

\begin{table}[h]
    \centering
    \caption{Pearson product-moment correlation coefficients of the two sampling strategies}
    \begin{tabular}{c|cccc}
    \toprule
        Dataset &  Cora & CiteSeer & Chameleon & Squirrel \\
    \midrule
        Coefficients & 0.8067 & 0.9903 & 0.9382 & 0.8431 \\
    \bottomrule 
    \end{tabular}
    \label{tab:hyper_sample}
\end{table}

\subsection{Reproductivity}
We now report some of the details of implementation for reproductivity, which are fully available in our code repository.

Firstly, the sampling hyper-parameter $S$ is restricted within $\{1,2\}$, whose insensitivity is studied in Sec.~\ref{sec:exp_real}.
$\alpha$ and $\beta$ are used for adjusting the proportion of the high- and low- frequency components in CLAR. 
In random splits, they are within $\{0.0,1.0,2.0\}$, discussed in Sec.~\ref{exp:art}.
In Planetoid splits, they are $\{0.0,\pm 0.001,\pm 0.005\}$ to avoid over-fitting on the regularization because of the stronger sparsity, where the limited negative values can expand the hyper-parameter space.



\section{Conclusion and Future work.}
Modern GNNs normally behave as low-pass filters and neglect the high-frequency components. In the context of regularization methods for GNNs, this paper proposes CLAR to enhance the high-frequency components efficiently. Based on our main theorem that the complement of the original graph incorporates a high-pass filter, we explore the complement and obtain a complementary Laplacian regularization term. Experiments verify that our proposal can help GNNs deal with over-smoothing and better express heterophilic graphs. 
We extensively discuss the spectral interpretability for existing regularization methods and point out that CLAR only possesses a high-pass filter within its effect. 

Next, the complement is worthy of exploring to model the graph spectrum better when low-frequency components are sparse, but high-frequency components might help. 
A high-frequency enhanced GNN architecture is also promising via the complement without increased parameters.

{\bibliographystyle{abbrv}
\bibliography{reference}}

\end{document}